\documentclass[pmlr]{jmlr} 

\usepackage{booktabs} 
\usepackage{graphicx} 
\usepackage{url}      
\usepackage{amsmath}  
\usepackage{algorithm2e} 

\usepackage{bm}

\usepackage{mathtools}
\usepackage{tikz}
\usepackage{nicefrac}

\renewcommand{\arraystretch}{1.3}

\newcommand{\class}{c}
\newcommand{\Class}{C}
\newcommand{\Classes}{\mathcal{C}}

\newcommand{\prediction}{\hat{\class}}

\newcommand{\predictedset}[1]{\hat{\Classes}_{#1,\features}}

\newcommand{\classifier}{h}
\newcommand{\classifierp}{h_{\proba}}
\newcommand{\classifierpa}{h_{\proba}}

\newcommand{\feature}[1]{f_{#1}}
\newcommand{\Feature}[1]{F_{#1}}
\newcommand{\FeatureSet}[1]{\mathcal{F}_{#1}}
\newcommand{\features}{\boldsymbol{f}}
\newcommand{\Features}{F}
\newcommand{\FeaturesSet}{\boldsymbol{\mathcal{F}}}
\newcommand{\numfeatures}{N}

\newcommand{\proba}{p}

\newcommand{\pertproba}{p'}
\newcommand{\probfamily}{\mathcal{P}}
\newcommand{\classprobfamily}{\probfamily_{\Class}}
\newcommand{\featprobfamily}{\probfamily_{\Feature{i} \vert \class}}
\newcommand{\featprobfamilypr}{\probfamily_{\Feature{i} \vert \prediction}}

\newcommand{\expectation}{E}
\newcommand{\prev}[2]{\expectation_{#1}\left( #2 \right)}
\newcommand{\prevbr}[2]{\expectation_{#1}\big[ #2 \big]}

\newcommand{\lowprev}[2]{\underline{\expectation}_{#1}\left( #2 \right)}
\newcommand{\lowprevbr}[2]{\underline{\expectation}_{#1}\big[ #2 \big]}
\newcommand{\indicator}[1]{\mathbb{I}_{#1}}
\newcommand{\probset}[1]{\Sigma_{#1}}
\newcommand{\classifprobset}{\probset{\Classes \times \FeaturesSet}}

\newcommand{\nbcpert}{\probfamily_{\mathrm{NBC}}}
\newcommand{\nbcparampert}{\probfamily^{\delta}_{\mathrm{NBC}}}

\newcommand{\parampert}[1]{\probfamily^{#1}}

\newcommand{\node}{v}
\newcommand{\leaf}{l}
\newcommand{\childnode}{u}
\newcommand{\rootnode}{r}
\newcommand{\childrennode}[1]{\mathrm{ch}(#1)}
\newcommand{\descnode}[1]{\mathrm{de}(#1)}
\newcommand{\descnoleafnode}[1]{\mathrm{sde}(#1)}
\newcommand{\weightsnode}{w}
\newcommand{\weightstree}{\overline{\weightsnode}}
\newcommand{\pertweights}{\mathcal{W}}
\newcommand{\pertweightsvec}[1]{\pertweights_{#1}}
\newcommand{\pertweightstree}[1]{\overline{\pertweights}_{#1}}

\newcommand{\vfunc}[1]{\underline{V}(#1)}

\newcommand{\rob}{r}

\definecolor{darkgreen}{rgb}{0.01, 0.65, 0.18}
\definecolor{rob_blue}{rgb}{0.0, 0.75, 1.0}
\definecolor{unc_yellow}{rgb}{0.95, 0.7, 0.15}
\definecolor{hybrid_green}{rgb}{0.1, 0.7, 0.2}

\jmlrvolume{TBD}
\jmlryear{2026}
\jmlrworkshop{Probabilistic Graphical Models (PGM)}

\title[Local Robusness Quantification for NBC and GeFs: a General Approach]{Local Robustness Quantification for Naive Bayes Classifiers and Generative Forests: a General Approach}

\author{\Name{Adrián Detavernier} \Email{adrian.detavernier@ugent.be}\\
  \Name{Jasper De~Bock} \Email{jasper.debock@ugent.be}\\
  \addr Foundations Lab for imprecise probabilities\\
  Ghent University\\
  Ghent, Belgium
}

\editor{Gustau Camps-Valls, Manuele Leonelli and Gherardo Varando}

\begin{document}

\maketitle

\begin{abstract}
We provide methods for calculating the robustness of the predictions of two types of generative classifiers whose underlying distribution is a Probabilistic Graphical Model (PGM): naive Bayes classifiers and generative forests (a probabilistic extension of random forests).
Following the paradigm of robustness quantification, we define the robustness of a prediction as the extent to which the distribution of the classifier can be perturbed without changing this prediction.
We consider perturbations obtained by varying the local models of the PGMs within general neighborhoods and focus in particular on epsilon-contamination, total variation distance and chi-squared divergence balls.
We test our methods on benchmark datasets, demonstrate that the robustness value of a prediction serves as an indicator for its trustworthiness and compare our approach with other such indicators.
\end{abstract}

\begin{keywords}
trustworthy classification, generative forests, naive Bayes classifier, imprecise probabilities
\end{keywords}

\section{Introduction}
\label{sec:intro}
Say that you are writing a paper and the deadline is approaching, so you have to quickly finish the paper and submit it.
However, the central proof of the paper is not working yet, so, to save time, you decide to consult a Large Language Model (LLM) to write the proof for you.
You get the proof from the LLM, but you realize that time's up and that you are not able to check the proof before submitting the paper.
\footnote{Disclaimer: this situation is purely hypothetical and does not reflect the actual practices of the author(s).}
Would you trust the proof of the LLM?
Would you take the risk?
Or would you only do it if the LLM is extremely confident about the proof?
If such a (hopefully unrealistic) situation arises, it would be very useful if the LLM were to give us some indication of the trustworthiness of its output.
Similarly for other high-stakes settings, such as medical diagnosis, it would be very useful if a model were able to assess the trustworthiness of its own predictions such that the user could take this information into account when making decisions based on these predictions.

In this paper, we consider one possible approach that attempts to accompany the individual predictions of a classifier with an indication of their trustworthiness: Robustness Quantification (RQ) \citep{detavernier2025robustness}.
RQ in particular quantifies the so-called \emph{robustness} of a classifier's predictions, which is the extent to which the model can be perturbed without changing the prediction, the idea being that predictions with high robustness are more trustworthy because they are less sensitive to errors  made while training the model.
RQ has so far been studied for classifiers based on several types of Probabilistic Graphical Models (PGMs): the Naive Bayes Classifier (NBC) \citep{detavernier2025robustness}, for Generative Forests (GeFs) \citep{correia2020robustclassificationdeepgenerative}, and for Bayesian networks and Markov random fields \citep{NIPS2014_09662890}.
These results have furthermore shown that the robustness of a prediction, defined in this way, can be a useful indicator for its trustworthiness, in the sense that predictions with a higher robustness value tend to be correct more often than predictions with a lower robustness value.

In this work, we show how RQ can be reduced to an optimization problem.
Additionally, we use ideas from imprecise probability theory~\citep{augustin2014introduction} and credal classification \citep{ITIPclassification} to provide general methods for reducing this optimization problem to local optimizations if the PGM is an NBC or GeF that is perturbed by local perturbations of its local probability or weight functions, respectively. In doing so, we generalize existing methods that focussed on \(\varepsilon\)-contamination perturbations.
We illustrate this generality by considering as perturbations total variation distance and chi-squared divergence balls, and we show in our experiments that the resulting robustness values still correlate with accuracy, and moreover that they are competitive with those obtained with \(\varepsilon\)-contamination and with indicators of trustworthiness that are based on uncertainty.

\section{Classification: Setting and Notation}
\label{sec:classification}
To formalize the setting, namely classification with discrete features, we introduce some notation.
The goal of a classifier is to predict the class \(\class\) of an instance, which is an element of a finite set of classes \(\Classes\).
When the class is unknown, we refer to it as the class variable \(\Class\).
To make this prediction, we typically have some information about the instance in the form of a set of features \(\features\).
Each of the features is denoted by a variable \(\Feature{i}\) and takes values \(\feature{i}\) in a finite set \(\FeatureSet{i}\), with \(i \in \{1, \dots, \numfeatures\}\), where \(\numfeatures\) is the number of features.
We will collect all the features of an instance in a vector, and refer to this vector as the feature vector or just the features of the instance.
The feature vector of an instance is given by \(\features = (\feature{1}, \dots, \feature{\numfeatures})\), where \(\feature{i}\) is the value of the \(i\)-th feature for this instance.
All such possible feature vectors are collected in the set \(\FeaturesSet = \FeatureSet{1} \times \dots \times \FeatureSet{\numfeatures}\).
Every instance is thus determined by the combination of a class \(\class\) and its features \(\features\).
A classifier \(\classifier: \FeaturesSet \to \Classes\) can therefore be seen as a function that maps feature vectors to classes.
In the ideal scenario, the predicted class \(\prediction \coloneq \classifier(\features)\) is the same as the true class of the instance.
In practice, however, this is not always the case.
That is exactly why assessing the trustworthiness of the predictions of a classifier is important.

This contribution focusses on generative classifiers that first estimate a joint probability mass function \(\proba\) on \(\Classes \times \FeaturesSet\), and then predict the class \(\prediction\) that maximizes the conditional probability  \(\proba(\class\vert\features) = \nicefrac{\proba(\class, \features)}{\proba(\features)}\), with \(\proba(\features) \coloneq \sum_{\class' \in \Classes} \proba(\class', \features)\).
This \(\prediction\) may not be unique though, or \(\proba(\class\vert\features)\) might be ill-defined if \(\proba(\features) = 0\), which is why formally, given a set of features \(\features\), a generative classifier \(\classifierp:\FeaturesSet \to \Classes\) chooses the predicted class \(\prediction\) from the set
\begin{equation}
    \label{eq:prob_classifier}
    \predictedset{\proba} \coloneq
    \left.
    \begin{cases}
        \arg \max_{\class \in \Classes} \proba(\class\vert\features) &\text{if }\proba(\features) > 0\\
        \Classes &\text{if } \proba(\features) = 0
    \end{cases}
    \right\}
    = \arg \max_{\class \in \Classes} \proba(\class, \features).
\end{equation}
In practice, \(\predictedset{\proba}\) will of course typically be a singleton, in which case \(\prediction\) is completely determined by \(\proba\), but our methods also work for features for which this is not the case.

\section{Robustness of a Prediction}\label{sec:robustness}

The distribution \(\proba\) learned by a probabilistic classifier is typically not the true data generating distribution; data subject to distribution shift, the use of a specific model architecture, or randomness in the training process/model parameters can all lead to deviations of \(\proba\) from the true distribution, and hence to a suboptimal prediction.
So, what if we would slightly change the distribution \(\proba\) that the model learned?
Would the prediction change?
How much would we have to change the distribution before the prediction changes?
RQ answers this last question by quantifying how much the learned distribution \(\proba\) of the model can be changed without changing the resulting prediction of the classifier.
To formalize this idea, we consider perturbations of the mass function \(\proba\), which are simply sets of mass functions that contain \(\proba\).

Let \(\Omega\) be a finite set of outcomes and let \(\probset{\Omega}\) be the set of all probability mass functions on \(\Omega\): maps from \(\Omega\) to \([0,1]\) that sum up to one.
\begin{definition}
    \label{def:perturbation}
	Consider a mass function \(\proba\in\probset{\Omega}\) on a finite set of outcomes \(\Omega\).
    A \emph{\textbf{perturbation}} of \(\proba\) is a compact set \(\probfamily\subseteq \probset{\Omega}\) of mass functions on \(\Omega\) such that \(\proba \in \probfamily\).
\end{definition}
This is a general and deliberately vague definition since it does not specify how the perturbation is constructed.
In practice, we will usually construct such a perturbation as a neighborhood around \(\proba\), with \(\proba\in \probset{\Classes\times\FeaturesSet}\) a mass function learned by a probabilistic classifier.

Given a perturbation \(\probfamily\) of \(\proba\) and set of features \(\features\), we can now construct the set of possible predictions \(\predictedset{\probfamily}\coloneq \bigcup_{\pertproba \in \probfamily} \predictedset{\pertproba}\) that are compatible with at least one of the perturbed models \(\proba'\in\probfamily\).
The prediction \(\prediction\) is said to be robust w.r.t. the perturbation \(\probfamily\) if the only such compatible prediction is \(\prediction\) itself.
\begin{definition}
	\label{def:robustness}
	Let \(\classifierpa\) be a generative classifier corresponding to a mass function \(\proba\).
	Let \(\prediction\) be the prediction according to \(\classifierpa\) for the set of features \(\features\), let \(\probfamily\) be a perturbation of \(\proba\), and let \(\predictedset{\probfamily}\) be the set of possible predictions w.r.t. the perturbation \(\probfamily\).
	Then \(\prediction\) is \emph{\textbf{robust}} w.r.t. the perturbation \(\probfamily\) if \(\predictedset{\probfamily}=\{\prediction\}\).
\end{definition}
For readers that are familiar with imprecise probabilities and credal classification, these ideas should be familiar. In that context, our perturbations would typically be called credal sets and \(\predictedset{\probfamily}\) would be an example of a corresponding credal classifier: a classification procedure that outputs a set of classes.\footnote{Many different imprecise decision criteria can be used to associate such a set of classes with a credal set~\cite{Troffaes_2007}. Our approach, which simply gathers the predictions of the different distributions in the credal set, essentially corresponds to the use of E-admissibility; it has the advantage of having an intuitive sensitivity-analysis interpretation. Credal classification instead typically uses another decision criterion, called maximality, for which the corresponding set of classes can more easily be determined using optimization techniques. For the purposes of determining robustness, however, this choice does not matter. For readers familiar with maximality, this should follow easily from~Theorem~\ref{def:robustness} further on.} Robustness of an instance then corresponds to this credal classifier being determinate, in the sense that its output contains only a single class. An important observation in credal classification, which inspired the development of RQ~\citep{NIPS2014_09662890}, is that traditional classifiers tend to perform worse on  instances where their credal counterparts remain indeterminate~\citep{corani2008learning}.

The ideas behind RQ are similar, but it has different aims. First, RQ is not interested in determining \(\predictedset{\probfamily}\), but focusses solely on robustness (determinacy). Second, instead of determining whether a prediction is robust w.r.t. a given fixed perturbation, RQ instead wants to quantify \emph{how} robust it is.
RQ does this by controlling the size of the perturbation in a parametrized manner and increasing this size until the prediction of the model is no longer robust, or thus until at least one distribution in the neighborhood predicts a different class.
This smallest perturbation size for which the prediction is no longer robust is then used as a numeric measure of robustness.
To formalize this, we consider \emph{parametrized perturbations}.
\begin{definition}\label{def:param_pert}
	Consider a mass function \(\proba\in\probset{\Omega}\) on a finite set of outcomes \(\Omega\).
    Let \(\parampert{\delta}\) be a perturbation of \(\proba\) for all \(\delta \in \Delta \subseteq \mathbb{R}_{\geq 0}\), where \(\Delta\) is a set of possible values for \(\delta\) that includes zero.
    Then the family \(\parampert{\Delta} \coloneq \left(\parampert{\delta}\right)_{\delta \in \Delta}\) is called a \emph{\textbf{parametrized perturbation}} of \(\proba\) if the following conditions hold: (1) if \(\delta = 0\), then \(\parampert{0} = \{\proba\}\); and (2) if \(\delta_1 \leq \delta_2\), then \(\parampert{\delta_1} \subseteq \parampert{\delta_2}\).
\end{definition}
Clearly, the bigger the value of \(\delta\), the bigger the perturbation, in the sense that it contains more distributions.
Concrete examples of such parametrized perturbations are given in Section~\ref{sec:local_pert}.
Next, we formalize the numeric measure of robustness.
\begin{definition}\label{def:robustness_value}
	Let \(\classifierpa\) be a generative classifier corresponding to a mass function \(\proba\), and let \(\prediction\) be the prediction according to \(\classifierpa\) for the set of features \(\features\).
	Let \(\parampert{\Delta}\) be a parametrized perturbation of \(\proba\), with \(\Delta \subseteq \mathbb{R}_{\geq 0}\) the possible values for the parameter \(\delta\).
    Then the \emph{\textbf{robustness (value)}} \(\rob_{\parampert{\Delta}}(\features)\) w.r.t. \(\parampert{\Delta}\) of an instance with features \(\features\) is the infimum \(\delta \in \Delta\) for which the prediction \(\prediction\) is no longer robust w.r.t. the perturbation \(\parampert{\delta}\).
    If there is no such \(\delta\), then we set \(\rob_{\parampert{\Delta}}(\features) = +\infty\).
\end{definition}
Henceforth, whenever we use the notation \(\parampert{\delta}\) for a perturbation, we implicitly assume that it is part of a parametrized perturbation \(\parampert{\Delta}\) for some set \(\Delta\) of possible values for \(\delta\).

This definition faithfully captures the ideas behind RQ, but it isn't the most practical one because, at first sight, checking whether a prediction is robust w.r.t. a perturbation would require us to determine the predictions of all the distributions in the perturbation.
To address this issue, we now proceed to provide an equivalent, more practical, characterization of robustness.
This characterization is expressed in terms of the lower expectation of a perturbation \(\probfamily\), defined for any real-valued function \(f\) on \(\Omega\) by
\begin{equation}
    \label{eq:def_lower_expectation}
    \lowprev{\probfamily}{f} \coloneq \min_{\pertproba \in \probfamily} \prev{\pertproba}{f},
\end{equation}
where \(\prev{\pertproba}{f}=\sum_{x\in\Omega} \pertproba(x)f(x)\) is the expectation of \(f\) w.r.t. the mass function \(\pertproba\).
If \(\Omega = \Classes\times\mathcal{Y}\) for some set \(\mathcal{Y}\), then for any \(\class\in\Classes\),  we use \(\indicator{\class}\) to denote the function that takes the value 1 if \(x\in\{\class\}\times\mathcal{Y}\) and 0 otherwise.
If \(\Omega = \FeaturesSet\times\mathcal{Y}\), \(\indicator{\features}\) is defined similarly.
\begin{theorem}
	\label{th:def_robustness}
	Let \(\classifierpa\) be a generative classifier corresponding to a mass function \(\proba\), let \(\prediction\) be the prediction according to \(\classifierpa\) for the set of features \(\features\), and let \(\probfamily\) be a perturbation of \(\proba\).
	Then \(\prediction\) is \emph{robust} w.r.t. the perturbation \(\probfamily\) if and only if
    \begin{equation}
        \label{eq:def_rob_probs}
		\forall \pertproba \in \probfamily, \forall \class \in \Classes \backslash \{\prediction\} \colon \pertproba(\prediction, \features) - \pertproba(\class, \features) > 0,
    \end{equation}
    or, equivalently, if
	\begin{equation}
		\label{eq:def_rob_lower}
		\min_{\class \in \Classes \backslash \{\prediction\}} \lowprevbr{\probfamily}{(\indicator{\prediction}-\indicator{\class})\indicator{\features}} > 0.
	\end{equation}
\end{theorem}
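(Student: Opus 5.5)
The plan is to prove the two equivalences in turn: first that robustness in the sense of Definition~\ref{def:robustness} is equivalent to \eqref{eq:def_rob_probs}, and then that \eqref{eq:def_rob_probs} is equivalent to \eqref{eq:def_rob_lower}. Throughout I will use the characterization $\predictedset{\pertproba}=\arg\max_{\class\in\Classes}\pertproba(\class,\features)$ from \eqref{eq:prob_classifier}. This characterization is valid for every mass function, including those with $\pertproba(\features)=0$.

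For the first equivalence, suppose \eqref{eq:def_rob_probs} holds and take any $\pertproba\in\probfamily$. Then $\prediction$ strictly beats every other class in $\pertproba(\cdot,\features)$, so $\predictedset{\pertproba}=\{\prediction\}$. Taking the union over $\probfamily$ gives $\predictedset{\probfamily}=\{\prediction\}$.

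Conversely, suppose \eqref{eq:def_rob_probs} fails. Then there are some $\pertproba\in\probfamily$ and $\class\neq\prediction$ with $\pertproba(\class,\features)\geq\pertproba(\prediction,\features)$. Now consider any maximizer $\class^\ast\in\predictedset{\pertproba}$, which exists because $\Classes$ is finite. Either $\class^\ast\neq\prediction$, or $\class^\ast=\prediction$. In the second case $\class$ attains the same value, so $\class$ is also a maximizer. In both cases $\predictedset{\pertproba}$ contains a class other than $\prediction$. Hence $\predictedset{\probfamily}\neq\{\prediction\}$, and $\prediction$ is not robust.

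One small point needs checking: that $\prediction\in\predictedset{\probfamily}$ at all. This holds because $\proba\in\probfamily$ and $\prediction\in\predictedset{\proba}$. It matters only in that robustness means the union equals exactly $\{\prediction\}$, and not merely that it is contained in it.

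For the second equivalence, I would compute $\prev{\pertproba}{(\indicator{\prediction}-\indicator{\class})\indicator{\features}}=\pertproba(\prediction,\features)-\pertproba(\class,\features)$ directly from the definitions of the indicators on $\Omega=\Classes\times\FeaturesSet$. So \eqref{eq:def_rob_probs} says exactly that for every $\class\neq\prediction$ and every $\pertproba\in\probfamily$ this linear functional is strictly positive.

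The main subtlety is passing from "positive for all $\pertproba$" to "the minimum is positive". This requires the minimum in \eqref{eq:def_lower_expectation} to be attained. It is attained because $\probfamily$ is compact by Definition~\ref{def:perturbation} and the expectation is a continuous (linear) function of $\pertproba$. An infimum of strictly positive values could otherwise be zero, so compactness is what makes the step work.

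With attainment in hand, the lower expectation is positive for each $\class$ if and only if the expectation is positive for all $\pertproba\in\probfamily$. Since $\Classes\setminus\{\prediction\}$ is finite, taking the minimum over $\class$ preserves strict positivity. This gives \eqref{eq:def_rob_lower}. If $\Classes=\{\prediction\}$, both conditions hold vacuously, with the minimum over the empty set interpreted as $+\infty$.
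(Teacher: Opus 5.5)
Your proposal is correct and follows essentially the same route as the paper. Robustness is reduced to \(\predictedset{\pertproba}=\{\prediction\}\) for every \(\pertproba\in\probfamily\), i.e.\ to the strict inequalities in \eqref{eq:def_rob_probs}; the difference is then rewritten as the expectation of \((\indicator{\prediction}-\indicator{\class})\indicator{\features}\), and compactness of \(\probfamily\) takes you from pointwise positivity to a positive minimum, with your extra care on both directions, nonemptiness of the argmax and the degenerate case \(\Classes=\{\prediction\}\) only making explicit what the paper leaves implicit.
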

\begin{proof}
	By Definition~\ref{def:robustness} and Equation~\eqref{eq:prob_classifier}, \(\prediction\) is robust w.r.t. \(\probfamily\) if and only if \(\predictedset{\proba'}=\{\prediction\}\) for all \(\pertproba \in \probfamily\).
	This holds if and only if
    \(
		\forall \pertproba \in \probfamily, \forall \class \in \Classes \backslash \{\prediction\} \colon \pertproba(\prediction, \features) - \pertproba(\class, \features) > 0,
    \)
    which proves the first equivalence of the theorem.
    Since \(\probfamily\) is compact, the previous condition is equivalent to requiring that the minimum of this difference over all \(\class \in \Classes \backslash \{\prediction\}\) and all \(\pertproba \in \probfamily\) is positive:
    \(
        \min_{\class \in \Classes \backslash \{\prediction\}} \min_{\pertproba \in \probfamily} \pertproba(\prediction, \features) - \pertproba(\class, \features)  > 0.
    \)
    Since \(\pertproba(\prediction, \features) - \pertproba(\class, \features) = \prevbr{\pertproba}{\indicator{\prediction}\indicator{\features}-\indicator{\class}\indicator{\features}}\), this is equivalent to
    \(
        \min_{\class \in \Classes \backslash \{\prediction\}} \lowprevbr{\probfamily}{(\indicator{\prediction}-\indicator{\class})\indicator{\features}} > 0,
    \)
    as claimed.
\end{proof}
Consequently, checking whether a prediction is robust w.r.t. a perturbation \(\probfamily\) only requires us to check Equation~\eqref{eq:def_rob_lower} of Theorem~\ref{th:def_robustness}.
Furthermore, note that the function \(R:\Delta\to\mathbb{R}:\delta \mapsto \min_{\class \in \Classes \backslash \{\prediction\}} \lowprev{\parampert{\delta}}{(\indicator{\prediction}-\indicator{\class})\indicator{\features}}\) is non-increasing, since the bigger the perturbation, the smaller the lower expectation.
Finding the robustness value therefore amounts to finding the infimum \(\delta\) for which \(R(\delta)=0\); that is, we need to find the smallest root of a non-increasing function.
In practice, the function \(R\) will often be continuous and strictly decreasing, which makes this root-finding problem even easier to solve.
Several methods exist for finding roots of such functions efficiently, which make the quantification of robustness feasible in practice.

Now that we have the theoretical tools to quantify the robustness of predictions of classifiers, we will apply these tools to two specific types of generative classifiers.

\section{Naive Bayes Classifier}
The first type of generative classifier we will be looking at is the Naive Bayes Classifier (NBC).
This is a very simple model that is often used as a good baseline for more complex classifiers.
The central assumption on which the Naive Bayes Classifier is based is that the features are conditionally independent given the class.
This implies that there is a mass function \(\proba_{\Class} \in \probset{\Classes}\) and, for all \(\class \in \Classes\) and \(i \in \{1, \dots, \numfeatures\}\), a mass function \(\proba_{\Feature{i}\vert\class} \in \probset{\FeatureSet{i}}\) such that
\begin{equation}
	\label{eq:NB_joint}
	\proba(\class, \features) = \proba_{\Class}(\class) \prod_{i=1}^{\numfeatures} \proba_{\Feature{i}\vert\class}(\feature{i}), \quad \text{for all } \class \in \Classes \text{ and } \features \in \FeaturesSet.
\end{equation}
We now consider a specific parametrized perturbation for such NBCs; it is a parametrized version of the model of the Naive Credal Classifier~\citep{ZAFFALON20025}---a generalization of an NBC based on imprecise probabilities.
In particular, we consider a local perturbation \(\classprobfamily^{\delta}\) of \(\proba_{\Class}\) and local perturbations \(\featprobfamily^\delta\) of \(\proba_{\Feature{i}\vert\class}\) for all \(\class \in \Classes\) and all \(i \in \{1, \dots, \numfeatures\}\), and let
\begin{equation}\label{eq:nbcperturbation}
	\nbcpert^{\delta} \coloneq \Big\{ \pertproba \in \Sigma_{\Classes \times \FeaturesSet} \colon \pertproba_{\Class} \in \classprobfamily^{\delta},\ \pertproba_{\Feature{i}\vert\class} \in \featprobfamily^\delta,\ \proba'(\class, \features) = \proba_{\Class}'(\class) \prod_{i=1}^{\numfeatures} \proba_{\Feature{i}\vert\class}'(\feature{i}) \Big\}
\end{equation}
be the corresponding perturbation of \(\proba\).
This is simply the set of all NBCs whose local models are taken from \(\classprobfamily^{\delta}\) and \(\featprobfamily^\delta\).
The superscript \(\delta\) indicates that these perturbations are part of a parametrized perturbation where the parameter \(\delta\) controls the size of the perturbations, enabling us to associate a robustness value with each instance.

Since the local models of the NBC are independent, and all models in the perturbation \(\nbcparampert\) are also NBCs, we can use this structure to simplify the condition for robustness given by Theorem~\ref{th:def_robustness} even further.

\begin{theorem}
    \label{th:NBC_robustness}
	Let \(\classifierpa\) be a Naive Bayes classifier with local models \(\proba_{\Class}\) and \(\proba_{\Feature{i}\vert\class}\) for all \(\class \in \Classes\) and all \(i \in \{1, \dots, \numfeatures\}\) and corresponding joint mass function \(\proba\) defined by Equation~\eqref{eq:NB_joint}.
    Let \(\prediction\) be the prediction according to \(\classifierpa\) for the set of features \(\features\).
    Consider local perturbations \(\classprobfamily^{\delta}\) and \(\featprobfamily^\delta\) for all \(\class \in \Classes\) and all \(i \in \{1, \dots, \numfeatures\}\).
    Then \(\prediction\) is robust w.r.t. the corresponding perturbation \(\nbcpert^{\delta}\) of \(\proba\) if and only if
    \begin{equation}
        \label{eq:th_NBC_robustness}
        \min_{\class \in \Classes \backslash \{\prediction\}} \lowprev{\classprobfamily^{\delta}}{\indicator{\prediction} \prod_{i=1}^{\numfeatures} \underline{\proba}_{\Feature{i}\vert\prediction}(\feature{i}) - \indicator{\class} \prod_{i=1}^{\numfeatures}\overline{\proba}_{\Feature{i}\vert\class}(\feature{i}) } > 0,
    \end{equation}
    where \(\underline{\proba}_{\Feature{i}\vert\prediction}(\feature{i}) = \min_{\pertproba_{\Feature{i}\vert\prediction} \in \featprobfamilypr^\delta} \pertproba_{\Feature{i}\vert\prediction}(\feature{i})\) and \(\overline{\proba}_{\Feature{i}\vert\class}(\feature{i}) = \max_{\pertproba_{\Feature{i}\vert\class} \in \featprobfamily^\delta} \pertproba_{\Feature{i}\vert\class}(\feature{i})\).
\end{theorem}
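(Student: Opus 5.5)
We reduce to Theorem~\ref{th:def_robustness} and then exploit the product structure of \(\nbcpert^{\delta}\). By Equation~\eqref{eq:def_rob_probs}, \(\prediction\) is robust w.r.t. \(\nbcpert^{\delta}\) if and only if, for every \(\class\neq\prediction\) and every \(\pertproba\in\nbcpert^{\delta}\),
\[
\pertproba_{\Class}(\prediction)\prod_{i=1}^{\numfeatures}\pertproba_{\Feature{i}\vert\prediction}(\feature{i})-\pertproba_{\Class}(\class)\prod_{i=1}^{\numfeatures}\pertproba_{\Feature{i}\vert\class}(\feature{i})>0 .
\]
Since the local models can be chosen independently, for fixed \(\class\neq\prediction\) this is a minimum over the product set \(\classprobfamily^{\delta}\times\prod_i\featprobfamilypr^{\delta}\times\prod_i\featprobfamily^{\delta}\). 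Here we use \(\class\neq\prediction\), so the conditionals for \(\prediction\) and for \(\class\) are distinct free variables. Each set is compact, so all minima and maxima are attained. The exact wording of the theorem also requires the minimum to be strictly positive, exactly as in the proof of Theorem~\ref{th:def_robustness}. I would therefore phrase robustness as \(\min_{\class\neq\prediction}\min_{\pertproba}(\cdots)>0\).

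Next, fix \(\class\) and \(\pertproba_{\Class}\), and minimise over the feature conditionals. The coefficients \(\pertproba_{\Class}(\prediction)\) and \(\pertproba_{\Class}(\class)\) are nonnegative, and all factors \(\pertproba_{\Feature{i}\vert\cdot}(\feature{i})\) are nonnegative. A product of nonnegative numbers is nondecreasing in each factor. Hence the minimum of the first term is attained by taking each \(\pertproba_{\Feature{i}\vert\prediction}(\feature{i})\) equal to \(\underline{\proba}_{\Feature{i}\vert\prediction}(\feature{i})\). The second term is subtracted, so its minimum is attained by taking each \(\pertproba_{\Feature{i}\vert\class}(\feature{i})\) equal to \(\overline{\proba}_{\Feature{i}\vert\class}(\feature{i})\). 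These choices are simultaneously feasible because the factors live in separate sets, and each extremum is attained by compactness. The inner minimum therefore equals
\[
\pertproba_{\Class}(\prediction)\prod_{i}\underline{\proba}_{\Feature{i}\vert\prediction}(\feature{i})-\pertproba_{\Class}(\class)\prod_i\overline{\proba}_{\Feature{i}\vert\class}(\feature{i}).
\]
This is exactly \(\prev{\pertproba_{\Class}}{g_\class}\) for the function \(g_\class=\indicator{\prediction}\prod_i\underline{\proba}_{\Feature{i}\vert\prediction}(\feature{i})-\indicator{\class}\prod_i\overline{\proba}_{\Feature{i}\vert\class}(\feature{i})\) on \(\Classes\).

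Finally, minimise over \(\pertproba_{\Class}\in\classprobfamily^{\delta}\). By the definition of the lower expectation, this gives \(\lowprev{\classprobfamily^{\delta}}{g_\class}\). Taking the minimum over \(\class\neq\prediction\) yields Equation~\eqref{eq:th_NBC_robustness}.

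The main obstacle is justifying that the iterated minimisation equals the joint minimum over \(\nbcpert^{\delta}\). This needs the following facts:
\begin{itemize}
\item every combination of local models yields an element of \(\nbcpert^{\delta}\);
\item conversely, every element of \(\nbcpert^{\delta}\) arises from such a combination (by its definition);
\item the monotonicity argument is valid because all quantities are nonnegative.
\end{itemize}
The last point needs a little care, since the monotonicity must hold for all choices of the other factors, including zeros. It does, because the functions involved are nondecreasing products of nonnegative numbers.
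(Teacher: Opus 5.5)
Your proposal is correct and follows the paper's proof essentially step for step. You reduce via Theorem~\ref{th:def_robustness} to a joint minimum over the local models, push the optimisation inside the products (lower probabilities for \(\prediction\), upper probabilities for the competitor \(\class\)) using independence and nonnegativity, and then recognise the remaining minimum over \(\classprobfamily^{\delta}\) as a lower expectation. Your explicit remarks on compactness, on the conditionals for \(\prediction\) and \(\class\) being distinct free variables, and on the monotonicity of products of nonnegative factors make rigorous steps that the paper states more briefly.
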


\begin{proof}
    By Theorem~\ref{th:def_robustness} \(\prediction\) is robust w.r.t. \(\nbcpert^{\delta}\) if and only if
    \begin{equation}
         \min_{\class \in \Classes \backslash \{\prediction\}} \min_{\pertproba \in \nbcpert^{\delta}} \pertproba(\prediction, \features) - \pertproba(\class, \features) > 0.
    \end{equation}
    Due to Equation~\eqref{eq:NB_joint} and the structure of \(\nbcpert^{\delta}\) in Equation~\eqref{eq:nbcperturbation}, this is equivalent to
    \begin{equation}
         \min_{\class \in \Classes \backslash \{\prediction\}} \min_{\pertproba_{\Class} \in \classprobfamily^{\delta}} \min_{\forall \class\in\Classes,\forall i\in\{1,\dots,\numfeatures\}:\ \pertproba_{\Feature{i}\vert\class} \in \featprobfamily^\delta}  \pertproba_{\Class}(\prediction) \prod_{i=1}^{\numfeatures} \pertproba_{\Feature{i}\vert\prediction}(\feature{i}) -  \pertproba_{\Class}(\class) \prod_{i=1}^{\numfeatures} \pertproba_{\Feature{i}\vert\class}(\feature{i}) > 0.
    \end{equation}
    To minimize this difference, we want to minimize the first (positive) term and maximize the negation of the second (negative) term.
    Because of the independence of the local mass functions in the above expression, we can rewrite it as
    \begin{equation}
        \min_{\class \in \Classes \backslash \{\prediction\}} \min_{\pertproba_{\Class} \in \classprobfamily^{\delta}} \left(  \pertproba_{\Class}(\prediction) \prod_{i=1}^{\numfeatures} \min_{\pertproba_{\Feature{i}\vert\prediction} \in \featprobfamilypr^\delta} \pertproba_{\Feature{i}\vert\prediction}(\feature{i}\vert\prediction) - \pertproba_{\Class}(\class) \prod_{i=1}^{\numfeatures} \max_{\pertproba_{\Feature{i}\vert\class} \in \featprobfamily^\delta} \pertproba_{\Feature{i}\vert\class}(\feature{i}\vert\class) \right) > 0.
    \end{equation}
    By using the definitions of the local lower and upper probabilities and local lower expectations, the above condition is equivalent to
    \begin{equation}
        \min_{\class \in \Classes \backslash \{\prediction\}} \lowprev{\classprobfamily^{\delta}}{\indicator{\prediction} \prod_{i=1}^{\numfeatures}\underline{\proba}_{\Feature{i}\vert\prediction}(\feature{i}) - \indicator{\class} \prod_{i=1}^{\numfeatures}\overline{\proba}_{\Feature{i}\vert\class}(\feature{i}) } > 0,
    \end{equation}
    as claimed.
\end{proof}
So, for NBCs the robustness value of an instance can be found by finding the smallest value \(\delta\) for which the condition of Theorem~\ref{th:NBC_robustness} is not satisfied anymore.

\section{Generative Forests}
Even in times when deep learning models are dominating the field of machine learning, tree-based models are still very much relevant \citep{NEURIPS2022_0378c769}, while being more interpretable and easier to train.
For this reason, and because of their easy to work-with structure, we choose to also apply RQ to tree-based models.
To be able to apply our approach, we opt for Generative Forests (GeFs), that, in contrast to most tree-based models, are generative classifiers.
A Generative Forest is either an ensemble of Generative Decision Trees (GeDTs) \citep{NEURIPS2020_8396b14c} or a uniform mixture of GeDTs \citep{correia2020robustclassificationdeepgenerative}; a GeDT can be seen as generative version of a Decision Tree (DT).
Because the mixture variant has a single joint distribution, it is more suitable for our purposes than the ensemble one.
We will therefore focus on the mixture variant.

\begin{figure}
    \begin{tikzpicture}
        \node(nothing) at (-4,0) {};
        \node (titleGeDT) at (-2.5, 0.4) [color=green!90!black] {\bf\large GeDT};

        \node (root) at (0,0) [circle, draw, line width=1pt, inner sep=-1pt, fill=green, opacity=0.6] {\bf\LARGE+};
        \node (root) at (0,0) [circle, draw, line width=1pt, inner sep=-1pt, fill=none] {\bf\LARGE+};
        \node (roottext) at (root.north) [yshift=0.15cm] {\(\node\)};
        \node (sumnode) at (-1.5,-1.5) [circle, draw, line width=1pt, inner sep=-1pt] {\bf\LARGE+};
        \node (sumtext) at (sumnode.north) [yshift=0.15cm] {\(\childnode_1\)};
        \node (leaf1) at (1.5,-1.5) [circle, draw, line width=1pt, minimum width=1.5em] {};
        \node (leaf1text) at (leaf1.north) [yshift=0.15cm] {\(\childnode_2\)};
        \node (leaf2) at (0,-3) [circle, draw, line width=1pt, minimum width=1.5em] {};
        \node (leaf2text) at (leaf2.north) [yshift=0.15cm] {\(t\)};
        \node (leaf3) at (-3,-3) [circle, draw, line width=1pt, minimum width=1.5em] {};
        \node (leaf3text) at (leaf3.north) [yshift=0.15cm] {\(s\)};

        \draw[-, line width=1pt] (root) -- (sumnode);
        \draw[-, line width=1pt] (root) -- (leaf1);
        \draw[-, line width=1pt] (sumnode) -- (leaf2);
        \draw[-, line width=1pt] (sumnode) -- (leaf3);

        \node (weightrootright) at (root.east) [xshift=.52cm, yshift=-.6cm, rotate=-42] {\small\(\weightsnode_{\node, \childnode_2}\)};
        \node (weightrootright) at (root.west) [xshift=-.5cm, yshift=-.55cm, rotate=42] {\small\(\weightsnode_{\node, \childnode_1}\)};
        \node (weightsumnodeleft) at (sumnode.west) [xshift=-.5cm, yshift=-.55cm, rotate=42] {\small\(\weightsnode_{\childnode_1, s}\)};
        \node (weightsumnoderight) at (sumnode.east) [xshift=.5cm, yshift=-.55cm, rotate=-42] {\small\(\weightsnode_{\childnode_1, t}\)};

        \draw[color=green!90!black, opacity=0.6, rounded corners=15pt, line width=2pt] (-3.5, .7) -- (2, .7) -- (2, -3.5) -- (-3.5, -3.5) -- cycle;

        \node (titleGeF) at (5.8, 0.4) [color=darkgreen] {\bf\large GeF};

        \node (GeFroot) at (7.75, 0) [circle, draw, line width=1pt, inner sep=-1pt, fill=darkgreen, opacity=0.6] {\bf\LARGE+};
        \node (GeFroot) at (7.75, 0) [circle, draw, line width=1pt, inner sep=-1pt, fill=none] {\bf\LARGE+};
        \node (GeFroottext) at (GeFroot.north) [yshift=0.15cm] {\(\rootnode\)};

        \node (gefchild1) at (5.6,-2.9) [circle, draw, line width=1pt, inner sep=-1pt, fill=green, opacity=0.6] {\bf\LARGE+};
        \node (gefchild1) at (5.6,-2.9) [circle, draw, line width=1pt, inner sep=-1pt, fill=none] {\bf\LARGE+};
        \node (gefchild1text) at (gefchild1.north) [yshift=0.13cm] {\(\node\)};
        \draw[color=green!90!black, opacity=0.8, rounded corners=7pt, line width=1.5pt] (5.2,-2.2) -- (6.0,-2.2) -- (6.0,-3.3) -- (5.2,-3.3) -- cycle;

        \node (gefchild2) at (6.7,-2.9) [circle, draw, line width=1pt, inner sep=-1pt, fill=green, opacity=0.6] {\bf\LARGE+};
        \node (gefchild2) at (6.7,-2.9) [circle, draw, line width=1pt, inner sep=-1pt, fill=none] {\bf\LARGE+};
        \node (gefchild2text) at (gefchild2.north) [yshift=0.18cm] {\(\node'\)};
        \draw[color=green!90!black, opacity=0.8, rounded corners=7pt, line width=1.5pt] (6.3,-2.2) -- (7.1,-2.2) -- (7.1,-3.3) -- (6.3,-3.3) -- cycle;

        \node (gefchild3) at (8.8,-2.9) [circle, draw, line width=1pt, inner sep=-1pt, fill=green, opacity=0.6] {\bf\LARGE+};
        \node (gefchild3) at (8.8,-2.9) [circle, draw, line width=1pt, inner sep=-1pt, fill=none] {\bf\LARGE+};
        \node (gefchild3text) at (gefchild3.north) [yshift=0.18cm, xshift=0.08cm] {\(\node''\)};
        \draw[color=green!90!black, opacity=0.8, rounded corners=7pt, line width=1.5pt] (8.4,-2.2) -- (9.2,-2.2) -- (9.2,-3.3) -- (8.4,-3.3) -- cycle;

        \node (gefchild4) at (9.9,-2.9) [circle, draw, line width=1pt, inner sep=-1pt, fill=green, opacity=0.6] {\bf\LARGE+};
        \node (gefchild4) at (9.9,-2.9) [circle, draw, line width=1pt, inner sep=-1pt, fill=none] {\bf\LARGE+};
        \node (gefchild4text) at (gefchild4.north) [yshift=0.18cm] {\(\node'''\)};
        \draw[color=green!90!black, opacity=0.8, rounded corners=7pt, line width=1.5pt] (9.5,-2.2) -- (10.3,-2.2) -- (10.3,-3.3) -- (9.5,-3.3) -- cycle;

        \node (dots) at (7.75, -3) {\bf\LARGE\(\dots\)};

        \draw[-, line width=1pt] (GeFroot) -- (gefchild1);
        \draw[-, line width=1pt] (GeFroot) -- (gefchild2);
        \draw[-, line width=1pt] (GeFroot) -- (gefchild3);
        \draw[-, line width=1pt] (GeFroot) -- (gefchild4);

        \node (weightrootchild1) at (GeFroot.west) [xshift=-.85cm, yshift=-1.3cm, rotate=54] {\small\(\weightsnode_{\rootnode, \node}\)};
        \node (weightrootchild1) at (GeFroot.west) [xshift=-.33cm, yshift=-1.3cm, rotate=67] {\small\(\weightsnode_{\rootnode, \node'}\)};
        \node (weightrootchild1) at (GeFroot.east) [xshift=.37cm, yshift=-1.44cm, rotate=-65 ] {\small\(\weightsnode_{\rootnode, \node''}\)};
        \node (weightrootchild1) at (GeFroot.east) [xshift=.92cm, yshift=-1.4cm, rotate=-54] {\small\(\weightsnode_{\rootnode, \node'''}\)};

        \draw[color=darkgreen, rounded corners=15pt, line width=2pt] (5, .7) -- (10.5, .7) -- (10.5, -3.5) -- (5, -3.5) -- cycle;

        \draw[->, line width=1.5pt] (2.5,-1.5) -- (4.5,-1.5);

    \end{tikzpicture}
    \caption{Visualization of the structures of a GeDT and a GeF.}
    \label{fig:GeFs_fig}
\end{figure}

A GeDT can be seen as a very simple Probabilistic Circuit (PC) with, in our case, a binary tree structure containing only leaf and sum nodes.
A GeF combines several GeDTs into one PC, where the root node is a sum node of which all the children are the root of a GeDT, as shown in \figureref{fig:GeFs_fig}.
If \(\node\) is a node in a GeF, then its children are collected in \(\childrennode{\node}\), its descendants in \(\descnode{\node}\) (excluding \(\node\) itself), and its sum node descendants (excluding the leaf nodes) in \(\descnoleafnode{\node}\).
Note that each node in a such a tree is the root of a subtree containing all its descendants and itself.
If a node has no children, so \(\vert \childrennode{\node} \vert = 0\), then it is a leaf node; otherwise it is a sum node.
A sum node \(\node\) has for each of its children \(\childnode \in \childrennode{\node}\) a weight \(w_{\node, \childnode} \geq 0\) such that \(\sum_{\childnode \in \childrennode{\node}} w_{\node, \childnode} = 1\).
For the root node \(\rootnode\) we set all weights equal, so \(\weightsnode_{\rootnode, \node} = \frac{1}{\vert \childrennode{\rootnode}\vert}\) for all \(\node \in \childrennode{\rootnode}\).
The vector of all weights of a sum node \(\node\) is denoted by \(\weightsnode_{\node}\) and the collection of all weight vectors of the sum nodes in a GeF rooted at \(\node\) is denoted by \(\weightstree_{\node} \coloneq \{\weightsnode_{\node}\} \times \bigtimes_{\childnode \in \descnoleafnode{\node}} \{ \weightsnode_{\childnode} \}\).

A selling point of GeDTs and GeFs is that they can specify complex joint distributions \(\proba \in \probset{\Classes \times \FeaturesSet}\) in such a way that they can be easily computed.
They do so by associating simple joint distributions with each of their leaves and then using the sum nodes to take weighted averages of these simple distributions.
Following \citet{correia2020robustclassificationdeepgenerative}, for each leaf node \(\leaf\), its distribution \(\proba_{\leaf}\) is taken to be factorized over the class and features, meaning that \(\proba_{\leaf}(\class, \features) = \proba_{\leaf,\Class}(\class) \prod_{i=1}^{\numfeatures} \proba_{\leaf,\Feature{i}}(\feature{i})\) for all \(\class \in \Classes\) and \(\features \in \FeaturesSet\), where \(\proba_{\leaf,\Class}\) is a local mass function on \(\Classes\) and \(\proba_{\leaf,\Feature{i}}\) a local mass function on \(\FeatureSet{i}\) for all \(i \in \{1, \dots, \numfeatures\}\).
In the case of a sum node, its joint distribution \(\proba_{\node}\) is a weighted sum of the joint distributions of its children, given by \(\proba_{\node}(\class, \features) = \sum_{\childnode \in \childrennode{\node}} \weightsnode_{\node, \childnode} \proba_{\childnode}\) for any \(\class \in \Classes\) and \(\features \in \FeaturesSet\).
In this way, the collection of all weights \(\weightstree_{\node}\) of a GeDT or GeF rooted at \(\node\), together with the local models of the leaves, completely determine its joint distribution \(\proba_{\node}\).
To make this explicit, we denote the joint distribution of the tree rooted at \(\node\) by \(\proba_{\weightstree_{\node}}\), leaving the dependency on \(\proba_{\leaf,\Class}\) and \(\proba_{\leaf, \Feature{i}}\) implicit.
In particular, the joint distribution of a GeF with root \(\rootnode\) is denoted by \(\proba_{\weightstree_{\rootnode}}\).

The main goal of GeFs is to classify each instance based on its set of features \(\features\).
Let \(\rootnode\) be the root node of a GeF with joint distribution \(\proba_{\weightstree_{\rootnode}}\), then the predicted class \(\prediction = \classifier_{\proba_{\weightstree_{\rootnode}}}(\features)\) is chosen from \(\predictedset{\proba_{\weightstree_{\rootnode}}}\) as defined in Equation~\eqref{eq:prob_classifier}.
We now want to assess the robustness of this prediction by constructing neighborhoods of the learned distribution \(\proba_{\weightstree_{\rootnode}}\).
As we also did for the NBC, we do this by perturbing their local parameters.
Previous work on calculating the robustness of predictions of GeFs did this by perturbing the weights of the sum nodes using \(\varepsilon\)-contamination \citep{correia2020robustclassificationdeepgenerative}.
We adopt a similar approach but generalize it to allow for arbitrary local perturbations.
Note that, in contrast to the NBC, we won't be perturbing any probability distributions over classes or features directly here, but only indirectly by perturbing the weights of the sum nodes.

Given a sum node \(\node\) in a GeDT or GeF with weights \(\weightsnode_{\node}\), we consider a perturbation \(\pertweightsvec{\node}^{\delta}\) of \(\weightsnode_{\node}\) as defined in Definition~\ref{def:perturbation}.
In this work, we choose not to perturb the weights of the root node \(\rootnode\) of a GeF to keep the importance of each GeDT equal.
This means that we choose \(\pertweightsvec{\rootnode}^{\delta} = \{\weightsnode_{\rootnode}\}\) for all \(\delta\).
When we perturb the weights of all or multiple nodes in a GeF rooted at \(\rootnode\), then we are essentially perturbing \(\weightstree_{\rootnode}\).
The collection containing all possible combinations of perturbations of the weights \(\weightstree_{\rootnode}\) of the GeF rooted at \(\rootnode\) is given by \smash{\(\pertweightstree{\rootnode}^{\delta} \coloneq \{\weightsnode_{\rootnode}\} \times \bigtimes_{\node \in \descnoleafnode{\rootnode}} \pertweightsvec{\node}^{\delta}\)}.
Now, since each \(\weightstree_{\rootnode}' \in \pertweightstree{\rootnode}^{\delta}\) has a corresponding joint distribution \(\proba_{\weightstree_{\rootnode}'}\), each perturbation \(\pertweightstree{\rootnode}^{\delta}\) corresponds to a perturbation \(\smash{\probfamily_{\pertweightstree{\rootnode}^{\delta}}}\coloneq \big\{ \proba_{\weightstree_{\rootnode}'}: \weightstree_{\rootnode}' \in \pertweightstree{\rootnode}^{\delta} \big\}\) of \(\proba_{\weightstree_{\rootnode}}\).
The parameter \(\delta\) again indicates that this perturbation is part of a parametrized perturbation, enabling us to associate a robustness value with each instance.

By Theorem~\ref{def:robustness}, to calculate the robustness of a GeF, we need to calculate the lower expectation over a given perturbation of its joint probability mass function.
We show that this can be done efficiently by using the structure of GeFs.
To that end, we first define a (recursive) function of the nodes in a GeF and prove that the value of this function at the root node of a GeF is enough to check the robustness.

\begin{theorem}
    \label{th:GeF_robustness}
	Let \(\classifier_{\proba_{\weightstree_{\rootnode}}}\) be a GeF rooted at node \(\rootnode\) with mass function \(\proba_{\weightstree_{\rootnode}} \in \classifprobset\) corresponding to the weights \(\weightstree_{\rootnode}\).
    Let \(\prediction\) be the prediction according to \(\classifier_{\proba_{\weightstree_{\rootnode}}}\) for the set of features \(\features\).
    Consider local perturbations \(\pertweightsvec{\node}^{\delta}\) of the weights \(\weightsnode_{\node}\) of all sum nodes \(\node \neq \rootnode\) and let \smash{\(\probfamily_{\pertweightstree{\rootnode}^{\delta}}\)} be the corresponding perturbation of \(\proba_{\weightstree_{\rootnode}}\).
    Then \(\prediction\) is robust w.r.t. \(\probfamily_{\pertweightstree{\rootnode}^{\delta}}\) if and only if
    \begin{equation}
        \label{eq:th_GeF_robustness}
        \min_{\class \in \Classes \backslash \{\prediction\}} \vfunc{\rootnode} > 0,
    \end{equation}
    where the function \(\underline{V}\) is defined recursively by
    \begin{equation}
        \vfunc{\node} \coloneq \begin{cases}
            \left(\proba_{\leaf,\Class}(\prediction) - \proba_{\leaf,\Class}(\class)\right)\prod_{i=1}^{\numfeatures}\proba_{\leaf,\Feature{i}}(\feature{i}) & \text{if } \node \text{ is a leaf node } \leaf,\\
            \frac{1}{\vert \childrennode{r} \vert} \sum_{\childnode \in \childrennode{r}} \vfunc{\childnode} & \text{if } \node \text{ is the root node } r,\\
            \min_{\weightsnode_{\node}' \in \pertweightsvec{\node}^{\delta}} \sum_{\childnode \in \childrennode{\node}} \weightsnode_{\node, \childnode}' \vfunc{\childnode} & \text{if } \node \text{ is any other (sum) node}.
        \end{cases}
    \end{equation}
\end{theorem}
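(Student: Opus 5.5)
By Theorem~\ref{th:def_robustness}, \(\prediction\) is robust w.r.t. \(\probfamily_{\pertweightstree{\rootnode}^{\delta}}\) if and only if, for every \(\class\in\Classes\backslash\{\prediction\}\), the minimum over \(\weightstree_{\rootnode}'\in\pertweightstree{\rootnode}^{\delta}\) of \(\proba_{\weightstree_{\rootnode}'}(\prediction,\features)-\proba_{\weightstree_{\rootnode}'}(\class,\features)\) is positive. Note that \(\vfunc{\node}\) implicitly depends on \(\class\), which is why it sits inside the minimum over \(\class\) in Equation~\eqref{eq:th_GeF_robustness}. It therefore suffices to fix \(\class\neq\prediction\) and prove, for every node \(\node\), the claim
\[
\vfunc{\node}=\min_{\weightstree_{\node}'\in\pertweightstree{\node}^{\delta}} \big(\proba_{\weightstree_{\node}'}(\prediction,\features)-\proba_{\weightstree_{\node}'}(\class,\features)\big),
\]
where \(\pertweightstree{\node}^{\delta}\coloneq\pertweightsvec{\node}^{\delta}\times\bigtimes_{\childnode\in\descnoleafnode{\node}}\pertweightsvec{\childnode}^{\delta}\). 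For a leaf, this product is empty and the right-hand side is just the leaf's own difference. For \(\node=\rootnode\) it equals \(\pertweightstree{\rootnode}^{\delta}\), since \(\pertweightsvec{\rootnode}^{\delta}=\{\weightsnode_{\rootnode}\}\). Applying the claim at \(\rootnode\) then gives the theorem. Each perturbation set is compact, and the objective is continuous (polynomial in the weights), so all minima are attained.

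I would prove the claim by structural induction from the leaves upward. For a leaf \(\leaf\), the factorization \(\proba_{\leaf}(\class,\features)=\proba_{\leaf,\Class}(\class)\prod_i\proba_{\leaf,\Feature{i}}(\feature{i})\) directly gives the stated expression. For a sum node \(\node\), I would write
\[
\proba_{\weightstree_{\node}'}(\prediction,\features)-\proba_{\weightstree_{\node}'}(\class,\features)=\sum_{\childnode\in\childrennode{\node}}\weightsnode'_{\node,\childnode}\, D_{\childnode}(\weightstree_{\childnode}'),
\]
where \(D_{\childnode}(\weightstree_{\childnode}')\) denotes the difference for the subtree rooted at \(\childnode\). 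The key structural fact is that the subtrees of distinct children share no sum nodes, because the structure is a tree. Hence \(\pertweightstree{\node}^{\delta}=\pertweightsvec{\node}^{\delta}\times\bigtimes_{\childnode\in\childrennode{\node}}\pertweightstree{\childnode}^{\delta}\), with \(\pertweightstree{\childnode}^{\delta}\) trivial when \(\childnode\) is a leaf. The minimum can then be taken first over the children's parameters with \(\weightsnode_{\node}'\) fixed, and afterwards over \(\weightsnode_{\node}'\).

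For fixed \(\weightsnode_{\node}'\), the weights \(\weightsnode'_{\node,\childnode}\) are nonnegative and each term depends only on its own independent block of variables. The minimum of the sum is therefore \(\sum_{\childnode}\weightsnode'_{\node,\childnode}\min D_{\childnode}=\sum_{\childnode}\weightsnode'_{\node,\childnode}\vfunc{\childnode}\) by the induction hypothesis. Nonnegativity is what makes the minimization separate: when a weight is zero the term vanishes and the choice of that child's parameters is irrelevant, and otherwise minimizing each term minimizes the sum. Minimizing over \(\weightsnode_{\node}'\in\pertweightsvec{\node}^{\delta}\) then yields the sum-node case of the recursion. At the root, \(\weightsnode_{\rootnode}'\) is fixed to the uniform weights, which yields the root case with the factor \(1/\vert\childrennode{\rootnode}\vert\).

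The main obstacle is justifying the exchange of minima: the variable set must factorize as a product over disjoint subtrees, and the weights must be nonnegative. I would state the disjointness of the subtrees' sum-node sets explicitly, since that is where the tree structure is used. I would also remark that the argument is exactly the one behind Theorem~\ref{th:NBC_robustness}, namely that independent local perturbations allow separate optimization.
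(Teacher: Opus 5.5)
Your proposal is correct and follows essentially the same route as the paper. The paper proves the same structural induction in Theorem~\ref{th:GeDT_lower}, for a general function \(g\) and a general root perturbation: it exchanges minimum and sum using nonnegativity of the weights and the product structure of the independent local perturbations. It then specializes to \(g=(\indicator{\prediction}-\indicator{\class})\indicator{\features}\) and \(\pertweightsvec{\rootnode}^{\delta}=\{\weightsnode_{\rootnode}\}\) and applies Theorem~\ref{th:def_robustness}, which is exactly what you do directly for this specific \(g\). Your explicit statements that the subtrees' sum-node sets are disjoint and that the minima are attained make two steps precise that the paper passes over quickly.
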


\begin{proof}
    Since \(\pertweightsvec{\rootnode}=\{\weightsnode_{\rootnode}\}\) in our case and the product that appears for the case $v=\ell$ is equal to \(E_{\proba_{\leaf}}((\indicator{\prediction}-\indicator{\class})\indicator{\features})\), it follows from Theorem~\ref{th:GeDT_lower} in Appendix~\ref{app:proofs} that \smash{\(\vfunc{\rootnode} = \lowprev{\probfamily_{\pertweightstree{\rootnode}^{\delta}}}{(\indicator{\prediction}-\indicator{\class})\indicator{\features}}\)}. The result now follows directly from Theorem~\ref{th:def_robustness}.
\end{proof}{}
Consequently, checking whether a prediction of a GeF is robust w.r.t. a perturbation can be done by recursively calculating the function \(\vfunc{\rootnode}\) at the root node of the GeF for each of the classes in \(\Classes \backslash \{\prediction\}\), and finding the robustness value of an instance amounts to finding the smallest value of \(\delta\) for which this is not the case.

\section{Local Perturbations}\label{sec:local_pert}
In this section, we discuss several examples of how a mass function can be perturbed, and give the formulas needed to calculate the robustness of predictions w.r.t. these perturbations.
Recall that the local perturbations that we consider in the paper are, in the case of the NBC, \(\classprobfamily^{\delta}\) of \(\proba_{\Class}\) and \(\featprobfamily^\delta\) of \(\proba_{\Feature{i}\vert\class}\) for all \(\class \in \Classes\) all \(i \in \{1, \dots, \numfeatures\}\); and in the case of GeFs, \smash{\(\pertweightsvec{\node}^{\delta}\)} of the weight vector \(\weightsnode_{\node}\) for each sum node \(\node\neq\rootnode\).
For the sake of ease of notation, we generically denote any of these as a perturbation \(\mathcal{Q}^{\delta}\) of a mass function \(q\) on a possibility space \(\Omega\).
Then for the three types of perturbations that we consider in this work, namely \(\varepsilon\)-contamination, Total Variation (TV) distance balls and \(\chi^2\)-divergence balls \citep{distancesref}, the perturbations are defined as follows:
\begin{itemize}
    \item \textbf{\(\varepsilon\)-contamination:} \(\mathcal{Q}^{\delta}_{\varepsilon} = \{ (1-\delta)q + \delta q' : q' \in \probset{\Omega} \}\), with \(\delta \in [0,1]\);
    \item \textbf{TV distance:} \(\mathcal{Q}^{\delta}_{\mathrm{TV}} = \{ q' \in \probset{\Omega} : \frac{1}{2} \sum_{\omega \in \Omega} \vert q(\omega) - q'(\omega) \vert \leq \delta \}\), with \(\delta \in [0,1]\);
    \item \textbf{\(\chi^2\)-divergence:} \(\mathcal{Q}^{\delta}_{\chi^2} = \{ q' \in \probset{\Omega} : \sum_{\omega \in \Omega} \frac{(q'(\omega) - q(\omega))^2}{q(\omega)} \leq \delta \}\), with \(\delta \geq 0\).
\end{itemize}

For each of the models, different optimization problems need to be solved to calculate the robustness of their predictions.
We now list all of these optimization problems and give for each of the three types of perturbations the formulas needed to solve these problems.
Even though there has already been work on calculating the robustness of predictions of NBCs and GeFs w.r.t. \(\varepsilon\)-contamination, we nevertheless give the formulas to make the paper self-contained and to make it easier to compare with the other perturbations.
For previous work on the robustness of NBCs w.r.t. \(\varepsilon\)-contamination, we refer to the work of \citet{detavernier2025robustness} and for GeFs to the work of \citet{correia2020robustclassificationdeepgenerative}.
The formulas for the TV distance and \(\chi^2\)-divergence balls are particular cases of more general formulas in recent work of \citet{lowerexpectationsref}, which simplify in our particular context.

Firstly, we know from Theorem~\ref{th:NBC_robustness} that for the NBC we need to calculate the lower and upper probabilities of \(\proba_{\Feature{i}\vert\prediction}(\feature{i})\) w.r.t. the perturbations \(\featprobfamily^\delta\), for all \(\class \in \Classes\) and \(i \in \{1, \dots, \numfeatures\}\).
These are particular instances of the task of computing the lower and upper probability of an outcome over a perturbation \(\mathcal{Q}^{\delta}\) of a mass function \(q\) on \(\Omega\), which we denote by \(\underline{q}_{\mathcal{Q}^{\delta}}(\omega) = \min_{q' \in \mathcal{Q}^{\delta}} q'(\omega)\) and \(\overline{q}_{\mathcal{Q}^{\delta}}(\omega) = \max_{q' \in \mathcal{Q}^{\delta}} q'(\omega)\) for any \(\omega \in \Omega\).
For our three perturbations the formulas for these bounds are given in \tableref{tab:lower_upper_probabilities}.
\begin{table}[htbp]
  \centering
  \caption{The formulas for \(\underline{q}_{\mathcal{Q}^{\delta}}(\omega)\) (left) and \(\overline{q}_{\mathcal{Q}^{\delta}}(\omega)\) (right) for our three perturbations.}
  \label{tab:lower_upper_probabilities}
  \begin{tabular}{c||l|l}
    \toprule
    \(\varepsilon\) & \((1-\delta)q(\omega)\) & \((1-\delta)q(\omega) + \delta\) \\
    TV & \(\max\{0, q(\omega) - \delta\}\) & \(\min\{1, q(\omega) + \delta\}\) \\
    \(\chi^2\) & \(\max\{0, q(\omega) - \sqrt{\delta q(\omega)(1-q(\omega))}\}\) & \(\min\{1, q(\omega) + \sqrt{\delta q(\omega)(1-q(\omega))}\}\) \\
    \bottomrule
  \end{tabular}
\end{table}


Next, still for the NBCs, by Theorem~\ref{th:NBC_robustness} we also need to calculate the lower expectation of \(\indicator{\prediction}\alpha - \indicator{\class}\beta\) w.r.t. the local perturbations \(\classprobfamily^{\delta}\) of \(\proba_{\Class}\), for all \(\class \in \Classes \backslash \{\prediction\}\), where \(\prediction\) is the predicted class by the NBC and with \(\alpha, \beta\geq0\).
In the equation in Theorem~\ref{th:NBC_robustness}, \(\alpha\) and \(\beta\) represent products of lower and upper probabilities that do not depend on the perturbation \(\classprobfamily^{\delta}\), which is why we write them as constants here.
The formulas for \(\lowprev{\parampert{\delta}}{\indicator{\prediction}\alpha - \indicator{\class}\beta}\) for our three perturbations are given in \tableref{tab:lower_expectation_f},
where \(\mu\) and \(\sigma^2\) are the expected value and variance of \(\indicator{\prediction}\alpha - \indicator{\class}\beta\) under \(\proba_{\Class}\), and \(\mu_2\) and \(\sigma_2^2\) are the expected value and variance of the restriction of this function to \(\Classes \backslash \{\prediction\}\) under the mass function \(\proba_{\Class}^*\) on \(\Classes \backslash \{\prediction\}\) defined by \(\proba_{\Class}^*(\class) = \frac{\proba_{\Class}(\class)}{1-\proba_{\Class}(\prediction)}\) for all \(\class \in \Classes \backslash \{\prediction\}\).
\begin{table}[htbp]
  \centering
  \caption{The formulas for \(\lowprev{\classprobfamily^{\delta}}{\indicator{\prediction}\alpha - \indicator{\class}\beta}\) for our three perturbations.}
  \label{tab:lower_expectation_f}
  \begin{tabular}{c||l}
    \toprule
    \(\varepsilon\) & \((1-\delta)(\alpha \proba_{\Class}(\prediction) - \beta \proba_{\Class}(\class)) - \delta\beta\) \\
    TV & \(\alpha\max\{0, \proba_{\Class}(\prediction) - \delta\} - \beta\min\{1, \proba_{\Class}(\class) + \delta\}\) \\
    \(\chi^2\) & \( \begin{cases}
                        \mu - \sigma\sqrt{\delta} & \text{if } \delta < \frac{\sigma^2}{(\alpha-\mu)^2}\\
                        -\beta & \text{if } \delta \geq \frac{1}{1-\proba_{\Class}(\prediction)}\left( \frac{\sigma^2_2}{\mu^2_2}+\proba_{\Classes}(\prediction) \right)\\
                        \mu_2 - \sigma_2\sqrt{\delta(1-\proba_{\Class}(\prediction)) - \proba_{\Class}(\prediction)} & \text{otherwise}
                    \end{cases}\) \\
    \bottomrule
  \end{tabular}
\end{table}

Lastly, for the GeFs, given a perturbation \(\pertweightsvec{\node}^{\delta}\) of the weights of a sum node \(\node\), we need to compute the minimum weighted average of the values of its children, where the value of a child \(\childnode \in \childrennode{\node}\) is given by the function \(\vfunc{\childnode}\).
In the GeFs we consider, this average always consists of only two terms, since all sum nodes of which we perturb the weights have only two children.
If we let \(\childnode_1\) and \(\childnode_2\) be the two children of \(\node\), such that, without loss of generality, \(\vfunc{\childnode_1} \leq \vfunc{\childnode_2}\), then this minimum reduces to:
\begin{equation}
    \min_{\weightsnode_{\node}' \in \pertweightsvec{\node}^{\delta}} \weightsnode_{\node, \childnode_1}' \vfunc{\childnode_1} + \weightsnode_{\node, \childnode_2}' \vfunc{\childnode_2} = \vfunc{\childnode_1} + (\vfunc{\childnode_2} - \vfunc{\childnode_1})\underline{\weightsnode}_{\node,\childnode_2},
\end{equation}
where \(\underline{\weightsnode}_{\node,\childnode_2} = \min_{\weightsnode_{\node}' \in \pertweightsvec{\node}^{\delta}} \weightsnode_{\node, \childnode_2}'\) is the minimum of \(\weightsnode_{\node, \childnode_2}\) w.r.t. the perturbation \(\pertweightsvec{\node}^{\delta}\).
Since the weight vector of a node is a mass function, finding this is equivalent to calculating a lower probability.
We can thus use the formulas in the left column of \tableref{tab:lower_upper_probabilities} to calculate this minimum for each of our three perturbations.

\section{Experiments}\label{sec:experiments}
We present some preliminary experiments\footnote{The source code is available at \url{https://github.com/addtaver/RobustnessQuantification}} on 11 datasets to illustrate the potential of the robustness values defined in this paper.
In the following, we denote the robustness values for \(\varepsilon\)-contamination, TV distance balls and \(\chi^2\)-divergence balls as \(\rob_{\varepsilon}\), \(\rob_{\mathrm{TV}}\) and \(\rob_{\chi^2}\) respectively.
For details on the experimental setup, training procedure and the datasets we used, we refer to Appendix~\ref{ap:experiments}.
We also compare our three robustness values with another approach that is commonly used to assess the trustworthiness of predictions, namely Uncertainty Quantification (UQ).
Details on the six UQ metrics we compare with (\(u_{\mathrm{marg}}\), \(u_{\mathrm{max}}\), \(u_{\mathrm{H}}\), \(u_{\mathrm{t}}\), \(u_{\mathrm{a}}\), \(u_{\mathrm{e}}\)) are given in Appendix~\ref{ap:uq_metrics}.

To evaluate the performance of our approach, we check how the accuracy of the model changes as we continuously reject the predictions with the lowest robustness (or highest uncertainty); the idea being that if robustness is a good indicator of the trustworthiness of a prediction, then the accuracy on the remaining predictions should increase as we reject more and more instances.
A curve showing for every percentage of rejected instances the accuracy on the remaining predictions, is called an \emph{accuracy-rejection curve} (ARC); \figureref{fig:acc_rej_curve} shows an example, where each line represents an average over 10 different runs.
\begin{figure}[htbp]
    \centering
    \includegraphics[width=\textwidth, trim=0cm .5cm 0cm 0.3cm]{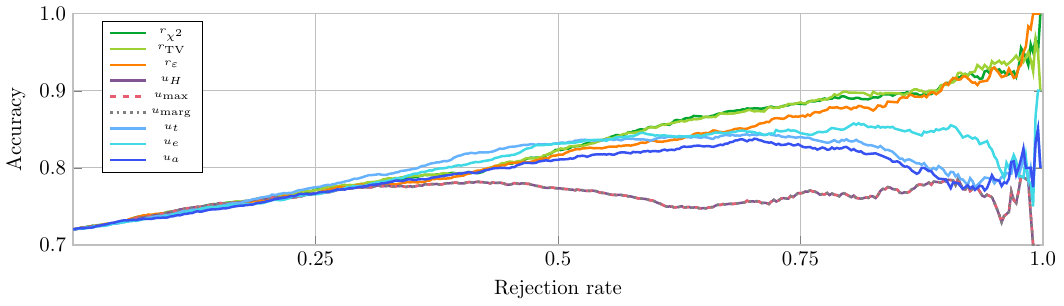}
    \caption{Accuracy-rejection curve for dataset D5 for GeFs.}
    \label{fig:acc_rej_curve}
\end{figure}
On this particular dataset, the three robustness measures seem to perform best overall since they keep increasing until the very end, more so than the other metrics shown.
However, if we would focus on specific Rejection Rates (RRs), then other metrics sometimes perform better.
Take for example \(u_{\mathrm{t}}\), which is clearly the best measure for RRs of around 40\%.

Focussing on specific RRs can be very relevant in practice if you only want to let the model make a prediction for the most trustworthy \(X\%\) of instances.
To evaluate the performance of the robustness values in such use cases, we focus on specific RRs in the ARC and check what metric of trustworthiness performs best for each of these RRs.
The results are summarized in \tableref{tab1} and \tableref{tab2}, where for each specific RR and trustworthiness metric the mean value of the accuracy at that RR over all datasets is shown, with the number of wins (W) for that metric at that RR next to it.
A win is defined as having the highest accuracy at that RR for a given dataset, and ties are solved by giving a win to all metrics that are tied for the highest accuracy.
We also included in the last row of these tables the average area under the ARC (AU-ARC) for each metric over all datasets, which instead of focussing on specific RRs, gives an overall indication of the performance of a metric as an indicator of trustworthiness.
The AU-ARC can also help us to summarize and compare ARCs in a more quantitative way, since comparing ARCs by eye can be very difficult, especially when there are many of them.

\begin{table}[htbp]
  \centering
  \caption{Each row, except the last one containing the AU-ARC, shows the mean accuracy on the remaining instances and the number of datasets for which each metric had the highest accuracy at that RR for the NBC.}
  \label{tab1}
  \includegraphics[width=\textwidth, trim=0cm .3cm 0cm 0.3cm]{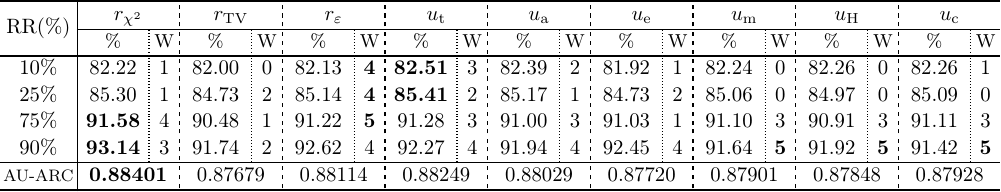}
\end{table}

\begin{table}[htbp]
  \centering
  \caption{Each row, except the last one containing the AU-ARC, shows the mean accuracy on the remaining instances and the number of datasets for which each metric had the highest accuracy at that RR for the GeF.}
  \label{tab2}
  \includegraphics[width=\textwidth, trim=0cm .3cm 0cm 0.3cm]{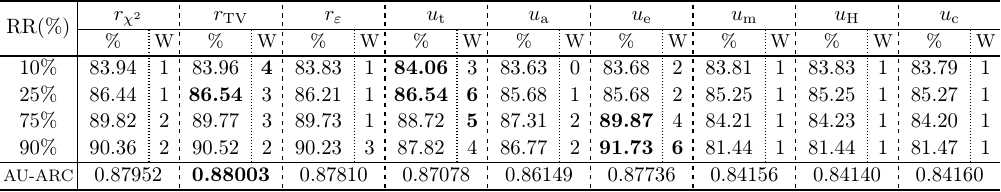}
\end{table}

If we focus on the two new robustness values in these tables, we see that \(\rob_{\chi^2}\) seems to be the best performing metric for the NBC, especially for higher RRs, while \(\rob_{\mathrm{TV}}\) seems to be the best performing metric for the GeF, especially for lower RRs.
The robustness value \(\rob_{\varepsilon}\) has similar AU-ARCs as the others, seems to do very well in terms of the number of wins for the NBC, and is just lacking behind the other two for the GeF.
For the UQ measures, we see that \(u_{\mathrm{e}}\) and \(u_{\mathrm{t}}\) seem to perform very well, since they tend to have a high number of wins and the highest AU-ARCs for certain RRs.
Overall, since some of the differences in performance are rather small, we prefer to refrain from drawing strong conclusions about which is the `best' metric, especially so since we observe that the relative performance of the different metrics depends on the model, RR and performance metric
We can conclude however that all three robustness values are competitive as indicators of trustworthiness.
What makes this particularly interesting, is that the metrics \(u_{\mathrm{t}}\), \(u_{\mathrm{a}}\) and \(u_{\mathrm{e}}\) use an ensemble that requires retraining the classifier on bootstrap samples of the training data.
This retraining can be computationally expensive though and is not always possible. This makes our robustness values, which do not require retraining, a promising alternative.


\section{Conclusion and Future Work}
The main contribution of this paper is that we generalized existing RQ approaches for NBCs and GeFs, which only considered \(\varepsilon\)-contamination, to a more general approach that allows for any type of perturbation of the local parameters of these models.
To illustrate this, we applied our results to two specific types of perturbations, namely TV distance and \(\chi^2\)-divergence balls, and provide the formulas needed to calculate the robustness of predictions w.r.t. these perturbations (and, for the sake of completeness, also for \(\varepsilon\)-contamination).
Finally, our experiments demonstrate that our new robustness values w.r.t. TV distance and \(\chi^2\)-divergence balls can be used as indicators for the trustworthiness of predictions.

Possible extensions of this work could be to consider other types of perturbations of the local parameters, or other types of models.
Since the behavior of the robustness values seems to be quite different for different contexts and models, it would also be interesting to study what perturbations are best suited for which contexts and models.
A start could be to study and compare our new methods in a context with noise or distribution shift, as previous work has shown that RQ is particularly well-suited in such contexts in comparison with UQ \citep{detavernier2025robustness,detavernier2026robustnessquantificationuncertaintyquantification}.

Previous work \citep{detavernier2026robustnessquantificationuncertaintyquantification} also shows that RQ and UQ can be complementary, and can be usefully combined to obtain even better assessments of trustworthiness. It would therefore be interesting to employ our new RQ metrics in such combinations as well.

\section*{Acknowledgments}
We would like to thank the anonymous reviewers for their time, kind words and helpful feedback.
The work of both authors was partially supported by Ghent University's Special Research Fund, through Jasper De Bock's Basic Research Funding project entitled ``Modelling Uncertainty with Imprecise Probabilities''.

\bibliography{references}

\appendix
\section{Computing Lower Expectations for GeFs}\label{app:proofs}
In the paper we consider perturbations \(\pertweightsvec{\node}^{\delta}\) of the weights \(\weightsnode_{\node}\) of a sum node \(\node\) in a GeF, where we implicitly assume that they are part of parametrized perturbations \(\pertweightsvec{\node}^{\Delta}\), with \(\delta\) taking values in \(\Delta\).
For notational convenience, in the following result we will simply consider any perturbation \(\pertweightsvec{\node}\) of the weights of a sum node \(\node\).
For the sake of generality, we also consider a general perturbation \(\pertweightsvec{\rootnode}\) of \(\weightsnode_{\rootnode}\) for the root node \(\rootnode\) of the GeF, rather than consider the trivial case \(\pertweightsvec{\rootnode} = \{\weightsnode_{\rootnode}\}\) as in the paper.
We collect all possible combinations of perturbations of the weights of the sum nodes in the tree rooted at \(\node\) in \(\pertweightstree{\node}\coloneq \pertweightsvec{\node} \times \bigtimes_{\childnode \in \descnoleafnode{\node}} \pertweightsvec{\childnode}\), and we denote the corresponding perturbation of the joint mass function \(\proba_{\weightstree_{\node}}\) by \(\probfamily_{\pertweightstree{\node}}\).
\begin{theorem}
    \label{th:GeDT_lower}
    Consider a GeF rooted at a node \(\rootnode\) with for each sum node \(\node\) a weight vector \(\weightsnode_{\node}\) and a perturbation \(\pertweightsvec{\node}\) of \(\weightsnode_{\node}\).
    For any (not necessarily sum) node \(\node\), let \(\probfamily_{\pertweightstree{\node}}\) be the corresponding perturbation of \(\proba_{\weightstree_{\node}}\).
    Then for any function \(g:(\Classes,\FeaturesSet)\to\mathbb{R}\):
    \begin{equation}
        \vfunc{\node} = \lowprev{\probfamily_{\pertweightstree{\node}}}{g},
    \end{equation}
    where, the function \(\underline{V}\) node is defined recursively by
    \begin{equation}
        \vfunc{\node} \coloneq \begin{cases}
            E_{\proba_{\node}}(g) & \text{if } \node \text{ is a leaf node},\\
            \min_{\weightsnode_{\node}' \in \pertweightsvec{\node}} \sum_{\childnode \in \childrennode{\node}} \weightsnode_{\node, \childnode}' \vfunc{\childnode} & \text{if } \node \text{ is a sum node}.
        \end{cases}
    \end{equation}
\end{theorem}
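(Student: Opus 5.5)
We prove the statement by structural induction on the tree, from the leaves up: it holds at every leaf, and if it holds at all children of a sum node, it holds at that node. A GeF is finite and every node is either a leaf or a sum node whose children root disjoint subtrees, so this covers every node, the root \(\rootnode\) included.

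\emph{Base case.} Let \(\node\) be a leaf. It has no weights, so \(\pertweightstree{\node}\) is trivial and \(\probfamily_{\pertweightstree{\node}}=\{\proba_{\node}\}\). Then \(\lowprev{\probfamily_{\pertweightstree{\node}}}{g}=E_{\proba_{\node}}(g)=\vfunc{\node}\).

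\emph{Inductive step.} Let \(\node\) be a sum node and assume \(\vfunc{\childnode}=\lowprev{\probfamily_{\pertweightstree{\childnode}}}{g}\) for every \(\childnode\in\childrennode{\node}\).

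First we describe \(\pertweightstree{\node}\). Because the subtrees of the children are disjoint, the sum-node descendants of \(\node\) split as \(\descnoleafnode{\node}=\bigcup_{\childnode\in\childrennode{\node}}\big(\{\childnode\}\cap\text{sum nodes}\big)\cup\descnoleafnode{\childnode}\), and this union is disjoint. Hence
\[
\pertweightstree{\node}\cong\pertweightsvec{\node}\times\bigtimes_{\childnode\in\childrennode{\node}}\pertweightstree{\childnode},
\]
where we set \(\pertweightstree{\childnode}\) to a singleton when \(\childnode\) is a leaf. So choosing a perturbed weight collection at \(\node\) amounts to choosing a weight vector \(\weightsnode_{\node}'\in\pertweightsvec{\node}\) together with an independent choice \(\weightstree_{\childnode}'\in\pertweightstree{\childnode}\) for each child.

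Next, by linearity of expectation and the recursive definition of \(\proba_{\weightstree_{\node}'}\),
\[
\prev{\proba_{\weightstree_{\node}'}}{g}=\sum_{\childnode\in\childrennode{\node}}\weightsnode_{\node,\childnode}'\,\prev{\proba_{\weightstree_{\childnode}'}}{g}.
\]

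Now we minimize over this product set. We first fix \(\weightsnode_{\node}'\) and minimize over the children's parameters. The weights \(\weightsnode_{\node,\childnode}'\) are nonnegative, and each summand depends only on its own factor \(\weightstree_{\childnode}'\). So the minimum of the sum equals the sum of the separate minima: for nonnegative coefficients, \(\min\sum_\childnode a_\childnode x_\childnode=\sum_\childnode a_\childnode\min x_\childnode\) whenever the variables \(x_\childnode\) range independently. The \(\childnode\)-th minimum is \(\lowprev{\probfamily_{\pertweightstree{\childnode}}}{g}\), which by the induction hypothesis equals \(\vfunc{\childnode}\). Minimizing afterwards over \(\weightsnode_{\node}'\in\pertweightsvec{\node}\) gives exactly the sum-node case of \(\vfunc{\node}\).

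This step needs care in two places.

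\emph{Attainment of minima.} The lower expectation is defined as a minimum, so every minimum above must be attained. The weight perturbations are compact, so \(\pertweightstree{\node}\) is compact. The map \(\weightstree'\mapsto\proba_{\weightstree'}\) is polynomial in the weights, hence continuous. Therefore \(\probfamily_{\pertweightstree{\node}}\) is compact and all minima exist. In the separation step, the minimizers of the individual children can be combined into a single element of the product set, so the minimum of the sum is attained as well.

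\emph{Bookkeeping.} The statement defines \(\underline{\expectation}\) through the distribution set \(\probfamily_{\pertweightstree{\node}}\), whereas the argument minimizes over the weight set \(\pertweightstree{\node}\). These agree: minimizing \(\prev{\pertproba}{g}\) over \(\pertproba\in\probfamily_{\pertweightstree{\node}}\) is the same as minimizing \(\prev{\proba_{\weightstree'}}{g}\) over \(\weightstree'\in\pertweightstree{\node}\), because \(\probfamily_{\pertweightstree{\node}}\) is by definition the image of \(\pertweightstree{\node}\). We carry out every optimization on the weights and only translate back to distributions at the end.
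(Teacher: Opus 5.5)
Your proof is correct and follows essentially the same route as the paper's. Both use structural induction from the leaves, and in the inductive step both use the independence of the local weight perturbations and the nonnegativity of the weights to rewrite the joint minimum over \(\pertweightstree{\node}\) as a minimum over \(\pertweightsvec{\node}\) of a weighted sum of the children's minima; your explicit splitting of \(\pertweightstree{\node}\) into \(\pertweightsvec{\node}\) and the children's \(\pertweightstree{\childnode}\), and your compactness argument for attainment, spell out steps the paper leaves implicit.
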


For our proof of this result, we took inspiration from the work of \citet{DERATANIMAUA2018163}, who define a recursive function on the nodes of a sum-product network to calculate conditional lower expectations.
Their result is on the one hand more general because it applies to sum-product networks that are more general than GeFs, but on the other hand less general because it only applies to specific inferences.
Their focus is also on the complexity of computations, whereas our focus is on calculating the lower expectation.\\

\begin{proof}\textbf{of Theorem~\ref{th:GeDT_lower}}
    First, consider the case where \(\node\) is a leaf node.
    In this case, the function \(\vfunc{\node}\) is given by \(\vfunc{\node} = E_{\proba_{\node}}(g)\).
    Since \(\node\) is a leaf node, there are no sum nodes in the tree rooted at \(\node\) and thus no weights to perturb, so \(\probfamily_{\pertweightstree{\node}} = \{\proba_{\node}\}\).
    Hence, the lower expectation of \(g\) over \(\probfamily_{\pertweightstree{\node}}\) is given by \(\lowprev{\probfamily_{\pertweightstree{\node}}}{g} = \min_{\proba \in \probfamily_{\pertweightstree{\node}}} E_{\proba}(g) = E_{\proba_{\node}}(g)\), as claimed.

    For the case where \(\node\) is a sum node, we prove the claim by induction on the structure of the tree rooted at \(\node\).
    Assuming that the claim holds for all children of \(\node\), we will show that it also holds for \(\node\).
    Since
    \(
        \vfunc{\node} = \min_{\weightsnode_{\node}' \in \pertweightsvec{\node}} \sum_{\childnode \in \childrennode{\node}} \weightsnode_{\node, \childnode}' \vfunc{\childnode},
    \)
    it follows from the induction hypothesis, that
    \begin{equation}
        \vfunc{\node} = \min_{\weightsnode_{\node}' \in \pertweightsvec{\node}} \sum_{\childnode \in \childrennode{\node}} \weightsnode_{\node, \childnode}' \lowprev{\probfamily_{\pertweightstree{\childnode}}}{g} = \min_{\weightsnode_{\node}' \in \pertweightsvec{\node}} \sum_{\childnode \in \childrennode{\node}} \weightsnode_{\node, \childnode}' \left( \min_{\proba \in \probfamily_{\pertweightstree{\childnode}}} \prev{\proba}{g} \right).
    \end{equation}
    Since each set of weights \(\weightstree_{\childnode}' \in \pertweightstree{\childnode}\) corresponds to an element \(\proba_{\weightstree_{\childnode}}'\) of \(\probfamily_{\pertweightstree{\childnode}}\), each minimization inside the sum is equivalent to minimizing over the perturbed sets of weights.
    By applying this, and the definition of the expectation, we get
    \begin{equation}
        \vfunc{\node} = \min_{\weightsnode_{\node}' \in \pertweightsvec{\node}} \sum_{\childnode \in \childrennode{\node}} \weightsnode_{\node, \childnode}' \left( \min_{\weightstree_{\childnode}' \in \pertweightstree{\childnode}} \sum_{(\class, \features)\in\Classes\times\Features} \proba_{\weightstree_{\childnode}'}(\class, \features) g(\class, \features) \right).
    \end{equation}
    Since the weights of a sum node are non-negative, we can take for each of the children of \(\node\) the minimum out of the sum:

    \begin{equation}
        \vfunc{\node} = \min_{\weightsnode_{\node}' \in \pertweightsvec{\node}} \min_{(\childnode \in \childrennode{\node})\weightstree_{\childnode}' \in \pertweightstree{\childnode}} \sum_{(\class, \features)\in\Classes\times\Features} \left( \sum_{\childnode \in \childrennode{\node}} \weightsnode_{\node, \childnode}'   \proba_{\weightstree_{\node}'}(\class, \features) \right) g(\class, \features).
    \end{equation}
    Since all perturbations of the weights are independent of each other, minimizing over all these perturbations separately is equivalent to minimizing over all perturbed sets of weights \(\weightstree_{\node}' \in \pertweightstree{\node}\) in the tree rooted at \(\node\).
    Hence, we can rewrite the above as
    \begin{equation}
        \vfunc{\node} = \min_{\weightstree_{\node}' \in \pertweightstree{\node}} \sum_{(\class, \features)\in\Classes\times\Features}  \left( \sum_{\childnode \in \childrennode{\node}} \weightsnode_{\node, \childnode}'   \proba_{\weightstree_{\childnode}}(\class, \features) \right) g(\class, \features).
    \end{equation}
    Finally, if we use the fact that \(\proba_{\weightsnode_{\node}'}(\class, \features) = \sum_{\childnode \in \childrennode{\node}} \weightsnode_{\node, \childnode}' \proba_{\weightstree_{\childnode}'}(\class, \features)\), and reuse similar arguments as before, we can finally rewrite the above as
    \begin{equation}
        \vfunc{\node} = \min_{\weightstree_{\node}' \in \pertweightstree{\node}} \sum_{(\class, \features)\in\Classes\times\Features} \proba_{\weightstree_{\node}'}(\class, \features) g(\class, \features) = \min_{\proba \in \probfamily_{\pertweightstree{\node}}} E_{\proba}(g) = \lowprev{\probfamily_{\pertweightstree{\node}}}{g},
    \end{equation}
    as claimed.
\end{proof}

\section{Additional Details on the Experiments}\label{ap:experiments}
The experiments we presented in Section~\ref{sec:experiments} were conducted on several datasets from the UCI Machine Learning Repository \citep{ucimlrepository}.
The datasets we used are shown in \tableref{tab:datasets}.
Next we give some details on how we cleaned these datasets and adapted the tasks to fit our setting.
First, we removed all continuous features from the datasets, because we focus on discrete features in this work.
Then we removed all instances with missing values from the datasets.
For some of the datasets, we performed more specific cleaning.
For the Solar Flare datasets and the Student Performance datasets, we adapted the task to one that leans more toward standard classification.
The former datasets are originally about predicting the number of solar flares that occur, and this for three types of flares; we made this classification task binary, with the aim to predict whether a solar flare occurs (of any type) or not.
For the Student Performance datasets,  the original task is to predict the grade of a student, but we adapted this to a binary classification task as well, where we predict whether the student passes or fails.

\renewcommand{\arraystretch}{1.0}
\begin{table}[htbp]
  \centering
  \caption{Details of the datasets used in the experiments.}
  \label{tab:datasets}
    \begin{tabular}{llccc}
      \toprule
      ID & Dataset & Size & \(\vert\Classes\vert\) & \(\vert\FeaturesSet\vert\)\\
      \midrule
      D1 & Adult & 3000 & 2 & 7\\
      D2 & Australian Credit Approval & 690 & 2 & 8\\
      D3 & Bank Marketing & 3000 & 2 & 9\\
      D4 & Breast Cancer Wisconsin & 683 & 2 & 9\\
      D5 & German Credit Data & 1000 & 2 & 13\\
      D6 & National Poll Healthy Aging & 714 & 3 & 13\\
      D7 & Solar Flare (big) & 1066 & 2 & 10\\
      D8 & Solar Flare (small) & 323 & 2 & 10\\
      D9 & SPECT Heart & 428 & 2 & 22\\
      D10 & Student Performance Math & 649 & 2 & 29\\
      D11 & Student Performance Port & 649 & 2 & 29\\
      \bottomrule
    \end{tabular}
\end{table}

The training procedure for the models is as follows.
Unless the dataset comes with a predefined split, we split the datasets into a training and test set, where the test set is 40\% of the whole dataset, and the size of the test set is maxed out at 500 instances.
The NBC is trained by first optimizing a smoothing parameter with 5-fold cross-validation on the training set and then training the model on the whole training set with this smoothing parameter.
For the details on how the GeFs are trained, we refer to the original paper of \citet{NEURIPS2020_8396b14c}, where we choose to combine 25 GeDTs and where the distributions of the leaf nodes are learned using the maximum likelihood principle.
The whole training procedure, including the splitting of the datasets, is repeated 10 times to reduce the effect of randomness on the results.
The ARCs are then calculated for each of the 10 runs and averaged out.

\section{Uncertainty Quantification Metrics}\label{ap:uq_metrics}
Another approach to assess the trustworthiness of a prediction is to try to quantify the uncertainty of the prediction, where (unlike for robustness) a higher uncertainty corresponds to a less trustworthy prediction.

In our experiments, we compare our robustness values with six commonly used uncertainty quantification measures.
The three straightforward measures are one minus the (conditional) predicted probability of the predicted class, denoted as \(u_{\mathrm{m}}\), the entropy of the predicted (conditional) distribution over the classes, denoted as \(u_{\mathrm{H}}\), and the margin of confidence being the difference between the highest and second highest predicted (conditional) probabilities, denoted as \(u_{\mathrm{c}}\).
Besides these, we also use some more complex measures that are based on the information-theoretic decomposition of uncertainty as is explained in the work of \citet{hullerenwillem}.
To estimate these, an ensemble of 10 models is trained on a bootstrap sample of the training set of the same size as the original training set \citep{pmlr-v80-depeweg18a,Mobiny2021,Shaker2020}.
The so-called the \emph{total uncertainty} \(u_{\mathrm{t}}\) is estimated as the entropy of the average predicted (conditional) distribution over all models, \emph{aleatoric uncertainty} \(u_{\mathrm{a}}\) is then estimated as the average entropy of the predicted (conditional) distributions over all models and the \emph{epistemic uncertainty} \(u_{\mathrm{e}}\) is estimated as the difference between the total and aleatoric uncertainty.

For these uncertainty measures, the corresponding ARCs in our experiments are obtained by first rejecting instances with the highest uncertainty.

\end{document}